\newcolumntype{L}{>$l<$}
\newtheorem{defn}{Definition}
\newtheorem{prop}{Proposition}
\newtheorem{cor}{Corollary}
\def\BibTeX{{\rm B\kern-.05em{\sc i\kern-.025em b}\kern-.08em
    T\kern-.1667em\lower.7ex\hbox{E}\kern-.125emX}}
\begin{document}

\title{Non-Comparative Fairness for Human-Auditing and Its Relation to Traditional Fairness Notions}

\author{\IEEEauthorblockN{Mukund Telukunta}
\IEEEauthorblockA{\textit{Computer Science Department} \\
\textit{Missouri University of Science and Technology}\\
Rolla, Missouri \\
mt3qb@umsystem.edu}
\and
\IEEEauthorblockN{Venkata Sriram Siddhardh Nadendla}
\IEEEauthorblockA{\textit{Computer Science Department} \\
\textit{Missouri University of Science and Technology}\\
Rolla, Missouri \\
nadendla@umsystem.edu}
}

\maketitle

\begin{abstract}
Bias evaluation in machine-learning based services (MLS) based on traditional algorithmic fairness notions that rely on comparative principles is practically difficult, making it necessary to rely on human auditor feedback. However, in spite of taking rigorous training on various comparative fairness notions, human auditors are known to disagree on various aspects of fairness notions in practice, making it difficult to collect reliable feedback.
This paper offers a paradigm shift to the domain of algorithmic fairness via proposing a new fairness notion based on the principle of non-comparative justice. In contrary to traditional fairness notions where the outcomes of two individuals/groups are compared, our proposed notion compares the MLS' outcome with a desired outcome for each input. This desired outcome naturally describes a human auditor's expectation, and can be easily used to evaluate MLS on crowd-auditing platforms. We show that any MLS can be deemed fair from the perspective of comparative fairness (be it in terms of individual fairness, statistical parity, equal opportunity or calibration) if it is non-comparatively fair with respect to a fair auditor. We also show that the converse holds true in the context of individual fairness. Given that such an evaluation relies on the trustworthiness of the auditor, we also present an approach to identify fair and reliable auditors by estimating their biases with respect to a given set of sensitive attributes, as well as quantify the uncertainty in the estimation of biases within a given MLS. Furthermore, all of the above results are also validated on COMPAS, German credit and Adult Census Income datasets.
\end{abstract}

\newpage
\section{Introduction}\label{sec: Introduction}
In recent years, the rapid advancements in the fields of artificial intelligence (AI) and machine learning (ML) have resulted in the proliferation of algorithmic decision making in many practical applications. Examples include decision-support systems for judges whether or not to release a prisoner on parole \cite{baryjester2015}, automated financial decisions in banks regarding granting or denying loans \cite{koren2016}, and product recommendations by e-commerce websites \cite{smith2017two}. Although these algorithms have had a significant improvement in overall system efficiency, they are also found to be biased and unfair in some sensitive regards which affected people's lives substantially. For example, a study by ProPublica in \cite{angwin2016} showed how recidivism scores computed by COMPAS algorithm were biased in terms of both race and gender. Likewise, racial discrimination was found in many practical algorithmic decision support services such as e-commerce services in online markets \cite{Harvard2016}, and in life insurance premiums \cite{waxmen2018}. 

The biggest challenge in tackling discrimination is that it can occur due to multiple contributing factors, which cannot be measured using a single fairness notion. Broadly, fairness in machine learning based algorithms can be measured using two fundamentally different philosophical notions: \emph{individual fairness} and \emph{group fairness}. Individual fairness notion is developed based on the principle that similar individuals should be treated similarly \cite{dwork2012fairness}. Applications of this fairness notion include designing hiring decisions in job markets based on applicant's experience and skill-set, college admission decisions based on student grades, and loan applications based on FICO scores. However, individual fairness typically relies on context-dependent distance metrics to evaluate similarity between individuals and their respective outcomes, which are usually unknown and difficult to quantify, especially when individuals under consideration belong to different groups/communities. 
On the other hand, group fairness notions address this limitation by evaluating how outcomes are distributed across groups based on a group-conditional metric. Different metrics lead to distinct statistical fairness notions such as statistical parity, equal opportunity and calibration \cite{MoritzOpportunities, zafar2017, ritov2017conditional}. However, statistical fairness notions have been recently found to be incompetent to offer fair services to individuals/subgroups across different groups. In 2016, ProPublica published a seminal article which demonstrated how COMPAS risk tool presents \emph{unfair} recidivism scores to judges \cite{angwin2016} as it predicts a higher false positive rate on the black defendants compared to their white counterparts (i.e. notion of Equalized odds \cite{hardt2016equality}). Although Northpointe (the company that designed COMPAS tool) designed the algorithm to ensure equal positive predicted value on both white and black defendants (i.e. calibration \cite{kleinberg2016inherent}), ProPublica has led to the recognition of a fundamental inconsistency in analyzing algorithmic fairness. This discrepancy were later proved theoretically by two different research groups independently \cite{chouldechova2017fair, kleinberg2016inherent} by showing that it is impossible to satisfy multiple group-fairness notions at the same time. Their study says that the reason behind such discrepancies is due to the frequency with which the blacks and whites were charged with new crimes. If the system has two populations that have unequal base rates, then you can't satisfy both the definitions of fairness (calibration and false positives) at the same time.

The inability to measure discrimination using available fairness notions necessitates our reliance on human auditors. However, human auditors may not always ensure guarantees from the perspective of a specific fairness notion (as needed within the application domain). In an attempt to address this challenge, there is a need to develop novel fairness approaches based on interaction between algorithmic fairness systems and human auditors. However, there are many challenges in designing an interactive framework with both human auditors and algorithmic fairness systems, some of which are listed below:
\begin{enumerate}[label = (\roman*)]

\item \emph{Heterogeneous Fairness Philosophies}: Since data used for machine learning purposes already contain biases in it, algorithms tend to be unfair to certain groups in society. Moreover, algorithms are trained based on a specific fairness metric and there's no one right metric. Similarly, humans also exhibit various biases due to prior experiences in the society and may perceive fairness in a completely different manner. Hence, dealing with different fairness philosophies is one of the major challenges. As a solution, Kearns \emph{et al.} design algorithms for learning classifiers that are fair with respect to an auditor, based on a formulation as a two-player zero-sum game between the Learner and an Auditor. They ask for subgroup fairness using statistical notions over a large number of protected groups where, the Learner strategy corresponds to classifiers that minimize the sum of prediction error and the Auditor is responsible for rectifying the fairness violation of the Learner \cite{kearns2017preventing}. Similarly, Zhang and Neil \cite{zhang2016identifying} also designed audit algorithms for classifiers to identify subgroups where estimated probability of outcome differ significantly from observed probabilities.


\item \emph{Metric Learning}:
The main challenge in the notion of individual fairness is constructing the task-based similarity metric which itself is a non-trivial task in fairness. Hence, researchers have employed trusted auditors to measure the similarity between individuals using an unknown metric \cite{kim2018fairness}. Similarly, Gillen \emph{et al.} in \cite{gillen2018online} assumes the existence of an auditor who is capable of identifying the fairness violations made in an online setting based on an unknown metric. Rothblum and Yona in \cite{yona2018probably} show that an approximate of fairness metric generalizes to new data drawn from an underlying population distribution. Jung \emph{et al.} in \cite{jung2019eliciting} study an offline learning problem with subjective individual fairness which is benefited by human experts. The proposed algorithm obtains feedback from human experts by asking them questions of the form: “should this pair of individuals be treated similarly or not?”. 

\item \emph{Novel Fairness Notions}: Raji \emph{et al.} in \cite{raji2020closing} suggests that auditing the algorithms can be achieved through internal organization development cycle which could help to tackle the ethical issues raised in a company. Their framework is developed by a small team of auditors in a large company who present data and model documentation along with metrics to facilitate auditing in specific contexts.
\end{enumerate}

This paper mainly focuses on the identification of effective human auditors in terms of their biases with respect to sensitive attributes. Unfortunately, both individual fairness and group fairness notions are insufficient as they are based only on \emph{comparative justice} principles, i.e. they compare individuals/groups based on the similarity of their treatment/outcomes. In order to effectively identify fair auditors and quantify their biases, we rely on another distinct fairness notion called \emph{non-comparative justice} \cite{levine2005comparative, feinberg1974noncomparative, montague1980comparative}, which states that every individual is treated precisely based on their own personal attributes and merits regardless of how other individuals are treated/affected by the same service. In other words, we assume that each auditor assesses the input attributes, constructs a desired outcome, and compares the algorithm's outcome with their own assessment. For example, in order to evaluate the algorithm behind college admissions, an auditor might consider grades/marks, projects, and internships in an individual's profile to construct their desired admission decision and compares it with the algorithm's outcome. 
If both the system's and auditor's outcomes are similar (dissimilar), then the system's classifier will be deemed fair (unfair) by the auditor. 

The remainder of the paper is organized as follows. Firstly, in Section \ref{sec: NC Fairness}, we show that fairness evaluations based on non-comparative justice principles is fundamental to achieving both individual and group fairness (esp. statistical parity) notions. We prove that a system/entity satisfies comparative justice if it satisfies non-comparative fairness with respect to the auditor. We also show that converse holds true in the case of individual fairness. However, auditor evaluations need to be taken with a grain of salt because the success of this framework depends on the assumption that the auditor is intrinsically unbiased. Therefore, in Section \ref{sec: Unknown Biases}, we also present a framework to identify fair auditors along with quantifying their biases towards sensitive attributes. Finally, in Section \ref{sec: Validation}, we also validate our findings on three real datasets, namely COMPAS, Adult Income Census and German credit datasets.

\begin{figure}
    \centering
    \includegraphics[width = \textwidth]{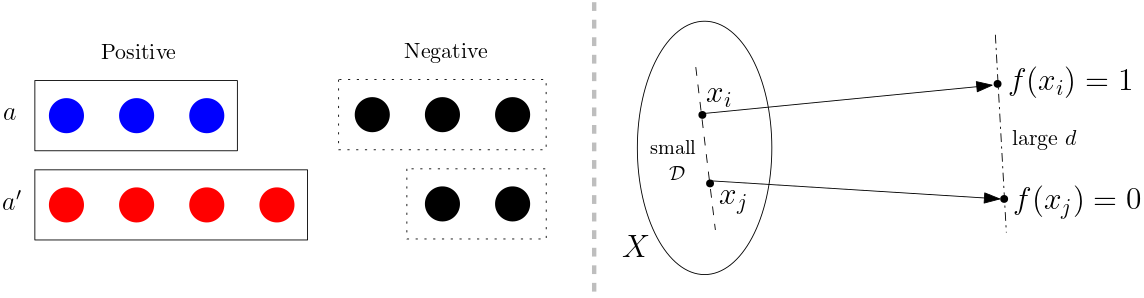}
    \caption{Comparative Notions of Fairness. \textbf{Left:} Unequal number of positive outcomes given for groups $a$ and $a'$, thereby conflicting statistical parity. \textbf{Right:} Two similar individuals (small $\mathcal{D}$) are given different outputs (large $d$) resulting in violation of individual fairness.}
    \label{fig: GFvsIF}
\end{figure}



\section{Comparative Justice for Algorithmic Fairness}\label{sec: Comparative Fairness}

Comparative justice arguments rely on either a comparison or a contrast between the way in which some system has treated two or more individuals or groups. The principle of comparative justice can be formulated as a combination of two axioms: (1) similar cases must be treated similarly, and (2) dissimilar cases must be treated differently. In some contexts, the system might treat individuals differently even though they should be treated similarly. This leads to \emph{comparative injustice}. For instance, imagine a situation where two individuals approach a corporate bank to apply for a loan. Even though both of them have relatively similar qualifications such as credit history, employment, and salary, the bank decides to grant the loan for only one of the individuals. This argument violates the principle of comparative justice because similar individuals are not treated similarly. The bank should either grant both or refuse both. On the contrary, suppose there exist two industries $A$ and $B$ which cause pollution to the environment in varying amounts. Though $B$ produces twice the amount of pollution compared to $A$, the government imposes the same amount of tax on both the industries. Since dissimilar cases are not treated differently, the argument violates the principle of comparative justice.  

As mentioned earlier, algorithmic fairness literature has focused on the principle of comparative justice, mainly, group fairness and individual fairness. The notion of group fairness seek for parity of some statistical measure across all the protected attributes present in the data. Different versions of group-conditional metrics led to different group definitions of fairness. For example, statistical parity \cite{dwork2012fairness} can be formally defined as follows.
\begin{defn}[Statistical Parity]
Given protected attributes $A$, if $f$ is a predictor and $Y = f(X)$, where, $X$ is the multi-attribute variable and $Y$ is the outcome, $f$ achieves statistical parity when
\begin{equation}
\mathbb{P}[f(x) = y \ | \ A = a] = \mathbb{P}[f(x) = y \ | \ A = a']
\end{equation}
holds true for all $y \in Y$, $x \in X$, and $a, a' \in A$.
\label{Defn: GF}
\end{defn}

Similarly, the notion of equal opportunity \cite{MoritzOpportunities} states that the true positive rate should be the same for all the groups which can be defined as follows.
\begin{defn}[Equal Opportunity]
We say that a binary predictor $f$ satisfies equal opportunity with respect to set of protected attributes $A$ and outcome, $Y = f(x)$, if
\begin{equation}
\mathbb{P}[f(x) = 1 \ | \ y = 1, A = a] = \mathbb{P}[f(x) = 1 \ | \ y = 1, A = a'].
\end{equation}
\label{Defn: EO}
\end{defn}
Another example for group-conditional metric is calibration \cite{kleinberg2016inherent, chouldechova2017fair} which ensures positive predictive value across different groups.
\begin{defn}[Calibration]
A binary predictor $f$ satisfies calibration given a set of protected attributes $A$ and outcome, $Y = f(x)$, if
\begin{equation}
\mathbb{P}[y = 1 \ | \ f(x) = 1, A = a] = \mathbb{P}[y = 1 \ | \ f(x) = 1, A = a'].
\end{equation}
\label{Defn: Calibration}
\end{defn}


On the contrary, the notion of individual fairness \cite{dwork2012fairness} states that similar individuals should be treated similarly concerning a particular task. The notion can be defined as follows.
\begin{defn}[Individual Fairness]
Given any two individuals $x_i, x_j \in \mathcal{X}$ with $\mathcal{D} \left( x_i,  x_j \right) \leq \kappa$, then the classifier $f$ is $(\kappa, \delta)$-individually fair if $d \Big( f(x_i), f(x_j) \Big) \leq \delta$. 
\label{Defn: IF}
\end{defn}



Evidently, the notion of individual fairness compares two individuals. However, the notion does not enforce on treating dissimilar individuals. Also, the formulation relies on a suitable similarity metric which is difficult to construct in reality. As pointed out in \cite{speicher2018unified}, individual fairness does not regard individual's merits or desires for different decisions. Hence, \cite{speicher2018unified} proposed a way to measure individual unfairness using inequality indices and taking a person's merit into consideration. On the other hand, \cite{kusner2017} developed a novel framework from causal inference, where a decision or an outcome is considered fair for an individual, only if it is the same in the actual as well as the counterfactual world. For more details, interested readers may refer to a detailed survey on fairness in algorithmic decision-making in \cite{Lepri2018, chouldechova2018frontiers}.

\section{Non-comparative Fairness in Human Judgements}\label{sec: NC Fairness}
Comparative fairness notions are defined based on a comparison between two persons, or two groups of people. However, non-comparative justice notions deviate from this assumption and define justice based on the appropriate treatment to each individual.

The principle of non-comparative justice can be formulated as, "treat each person as he/she deserves or merits". Such an argument does not depend upon any comparison or contrast with the way in which some system treats two or more individuals or groups. For instance, historically, the African-Americans are said to commit more crimes when compared to other races \cite{colorofcrime}. As a result, the recidivism prediction tools are biased towards African-Americans even though the severity of their crimes is less compared to other races. Naturally, tools which are trained on such historical information reflect similar biases. Hence, as a solution, we demonstrate the notion of non-comparative fairness with the help of an expert auditor who classifies inputs based on fair judgements regardless of the historical information or any comparisons. We assume that the auditor employs an intrinsic fair relation to classify the individuals. The proposed notion can be utilized to identify whether an unknown system is fair by comparing system's outcomes with auditor judgements. We define non-comparative fairness formally as follows.

\begin{defn}[$\epsilon$-Noncomparative Fairness]
Let $f$ denote a fair assessment (i.e. input-output relationship) of a given system, i.e. $y = f(x)$, which is evaluated subjectively by an expert auditor. If $g$ is an alternative representation, i.e. $\tilde{y} = g(x)$ (e.g. classification algorithms minimizing loss function, recommender systems), then $g$ is called $\epsilon-$noncomparatively fair w.r.t. $f$ if \begin{equation}
\displaystyle d \left( g(x), f(x) \right) < \epsilon, \text{ for all } x \in \mathcal{X}.
\end{equation}
\label{Defn: Noncomparative Fairness}
\end{defn}
Though the above notion compares the auditor's fair judgements $f$ with the classifier/recommender $g$, it does not compare two different individuals thereby adhering to the principle of non-comparative justice. Figure \ref{fig:NCF Architecture} represents the procedure of identifying fair/unfair system with respect to the auditor. The auditor reveals a binary fair evaluation $s \in \{0, 1\}$ indicating whether the classifier is fair or not. In other words, if the auditor's judgements are similar to classifier outcome then the classifier is said to be fair. However, note that non-comparative fairness notions have their drawbacks. For example, if the auditor is discriminatory, then $f$ is no longer a fair relation. Therefore, it is necessary to investigate how traditional fairness notions are related to the notion of non-comparative fairness. One important assumption in our analysis is that we assume that both the human auditor and the algorithm employ the same distance metric $d$ in evaluating the gap between input-output relationships. 

\subsection{Relation with Individual Fairness}\label{Sec - Individual Fairness}
Recall that individual fairness adopts the principle of comparative fairness by comparing two different individuals. In the following proposition, we show how the relation $g$ can be evaluated based on the notion of $(\kappa, \delta)$-individual fairness, when $g$ is non-comparatively fair with respect to another individually fair relation $f$.
\begin{prop}
$g$ is $(\kappa, 2 \epsilon + \delta)$-individually fair, if $g$ is     $\epsilon$-noncomparatively fair with respect to $f$, and $f$ is $(\kappa, \delta)$-individually fair.
\label{Prop: (e, k, d)-IF}
\end{prop}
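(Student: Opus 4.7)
The plan is to reduce the claim to a single application of the triangle inequality for the distance metric $d$, which the paper assumes is common to both $f$ and $g$. Fix any two individuals $x_i, x_j \in \mathcal{X}$ with $\mathcal{D}(x_i, x_j) \leq \kappa$, and consider the quantity $d\bigl(g(x_i), g(x_j)\bigr)$ that must be bounded in order to certify $(\kappa, 2\epsilon + \delta)$-individual fairness of $g$.

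First, I would insert $f(x_i)$ and $f(x_j)$ as intermediate points and write
\begin{equation*}
d\bigl(g(x_i), g(x_j)\bigr) \leq d\bigl(g(x_i), f(x_i)\bigr) + d\bigl(f(x_i), f(x_j)\bigr) + d\bigl(f(x_j), g(x_j)\bigr).
\end{equation*}
Next, I would bound each summand using the two hypotheses. The first and third terms are each strictly less than $\epsilon$ because $g$ is $\epsilon$-noncomparatively fair with respect to $f$ (Definition~\ref{Defn: Noncomparative Fairness} applied pointwise at $x_i$ and $x_j$). The middle term is at most $\delta$ because $f$ is $(\kappa, \delta)$-individually fair and the pair $(x_i, x_j)$ satisfies $\mathcal{D}(x_i, x_j) \leq \kappa$. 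Summing gives $d\bigl(g(x_i), g(x_j)\bigr) < 2\epsilon + \delta$, which is precisely the definition of $(\kappa, 2\epsilon + \delta)$-individual fairness for $g$.

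There is no real obstacle here; the proof is essentially a three-term triangle inequality. The only point that deserves explicit mention is the standing assumption that the output distance $d$ is a genuine metric (or at least satisfies the triangle inequality) and is shared by the auditor and the algorithm, since the argument would otherwise not close. I would also remark that the bound is tight in the worst case: the factor of $2$ in front of $\epsilon$ cannot be improved, as the deviations of $g$ from $f$ at $x_i$ and at $x_j$ could point in opposite directions and therefore add.
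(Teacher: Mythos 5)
Your proof is correct and follows essentially the same route as the paper's: inserting $f(x_i)$ and $f(x_j)$ as intermediate points, applying the three-term triangle inequality, and bounding the outer terms by $\epsilon$ via non-comparative fairness and the middle term by $\delta$ via individual fairness of $f$. Your added remarks on the shared metric assumption and the tightness of the factor of $2$ are sensible but do not change the argument.
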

\begin{proof}
Given $(x_1, y_1)$ and $(x_2, y_2)$ such that $\mathcal{D} \left( x_1,  x_2 \right) \leq \kappa$ (the two individuals are $\kappa$-similar), then $f$ is $(\kappa, \delta)$-individually fair if $d \Big( f(x_1), f(x_2) \Big) < \delta$. However, note that if $g$ is $\epsilon$-noncomparatively fair with respect to $f$, then $ d \Big( g(x_1), f(x_1) \Big) < \epsilon$ and $d \Big( g(x_2), f(x_2) \Big) < \epsilon$. Therefore, by applying a chain of triangle inequalities, we obtain
\begin{equation}
\begin{array}{lcl}
d \Big( g(x_1), g(x_2) \Big) & \leq & d \Big( g(x_1), f(x_1) \Big) + d \Big(  f(x_1), f(x_2) \Big) + d \Big( f(x_2), g(x_2) \Big) 
\\[2ex]
& < & 2 \epsilon + \delta.
\end{array}
\label{Eqn: Prop1 - Final eqn}
\end{equation}
\end{proof}

We illustrate this result using the following example from the banking domain. Consider two individuals who are looking to apply for a loan. An individually fair banking system evaluates both the applications via collecting various customer's attributes such as gender, race, address, credit history, collateral, and his/her ability to pay back. At the same time, consider an auditor who makes fairness judgements based on the rule: "If he/she has cleared all the debts and possesses reasonably valued collateral, the loan must be granted". Given that the auditor treats any two similar individuals similarly, the auditor is individually fair. If the banking evaluation system is relatively similar to the auditor's fair relation, from Proposition \ref{Prop: (e, k, d)-IF}, the banking system is also individually fair. Otherwise, if the banking system is not non-comparatively fair with respect to the auditor, then the system itself is also not individually fair.

\begin{prop}
If $f$ is not $(\kappa, \delta)$-individually fair and if $g$ is $\epsilon$-noncomparatively fair with respect to $f$, then $g$ is not $(\kappa, \delta - 2\epsilon)$-individually fair.
\label{Prop: IF - converse}
\end{prop}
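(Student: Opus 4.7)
The plan is to exhibit the same pair of individuals that witnesses $f$'s failure of individual fairness, and show that this pair simultaneously witnesses $g$'s failure at the claimed level. Concretely, since $f$ is not $(\kappa,\delta)$-individually fair, by negating Definition \ref{Defn: IF} there must exist $x_1, x_2 \in \mathcal{X}$ with $\mathcal{D}(x_1, x_2) \leq \kappa$ yet $d(f(x_1), f(x_2)) > \delta$. My goal is to show that these same two points make $d(g(x_1), g(x_2)) > \delta - 2\epsilon$, which is exactly the negation of $(\kappa,\delta-2\epsilon)$-individual fairness for $g$.

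The core step is a reverse triangle inequality, mirroring the chain used in Proposition \ref{Prop: (e, k, d)-IF} but rearranged to \emph{lower}-bound $d(g(x_1), g(x_2))$ rather than upper-bound it. I would start from the triangle inequality
$d(f(x_1), f(x_2)) \leq d(f(x_1), g(x_1)) + d(g(x_1), g(x_2)) + d(g(x_2), f(x_2))$,
substitute the two noncomparative-fairness bounds $d(g(x_i), f(x_i)) < \epsilon$ for $i \in \{1,2\}$ together with the witness bound $d(f(x_1), f(x_2)) > \delta$, and rearrange to obtain $d(g(x_1), g(x_2)) > \delta - 2\epsilon$, which completes the proof.

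I do not expect any genuine obstacle, since this is essentially a two-line estimate; the one subtlety worth noting is the bookkeeping of strict versus non-strict inequalities. The definition of $(\kappa,\delta)$-individual fairness uses $\leq \delta$, so its negation is the strict inequality $d(f(x_1), f(x_2)) > \delta$, and noncomparative fairness is also strict ($< \epsilon$), which lets me conclude strictly that $d(g(x_1), g(x_2)) > \delta - 2\epsilon$ and hence violate $(\kappa, \delta-2\epsilon)$-individual fairness. As a sanity check, I would observe that because $d$ is symmetric in its arguments, this proposition is in fact the logical contrapositive of Proposition \ref{Prop: (e, k, d)-IF} (with the roles of $f$ and $g$ swapped via the symmetry of noncomparative fairness, and the reparametrization $\delta \mapsto \delta - 2\epsilon$); so an equally valid presentation is to derive it directly as a corollary of Proposition \ref{Prop: (e, k, d)-IF} rather than by repeating the triangle-inequality argument from scratch.
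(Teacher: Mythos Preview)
Your proposal is correct and follows essentially the same route as the paper: both start from the triangle inequality $d(f(x_1),f(x_2)) \leq d(f(x_1),g(x_1)) + d(g(x_1),g(x_2)) + d(g(x_2),f(x_2))$, plug in the two $\epsilon$-noncomparative-fairness bounds and the witnessing inequality $d(f(x_1),f(x_2)) > \delta$, and rearrange to obtain $d(g(x_1),g(x_2)) > \delta - 2\epsilon$. Your handling of the strict/non-strict bookkeeping and your contrapositive remark are, if anything, cleaner than the paper's own write-up.
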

\begin{proof}
If $f$ is not individually fair, then for some input pair $(x_1, x_2)$ such that $\mathcal{D}(x_1, x_2) < \kappa$, we have $d\left(f(x_1), f(x_2)\right) > \delta$ for all $\kappa, \delta \in \mathbb{R}$. However, note that if $g$ is $\epsilon$-noncomparatively fair with respect to $f$, then $d\left(g(x_1), f(x_1)\right) < \epsilon$ and $d\left(g(x_2), f(x_2)\right) < \epsilon$. Therefore, by applying a chain of triangle inequalities, we have
\begin{equation}
\begin{array}{lcl}
d\left(f(x_1), f(x_2)\right) & \leq & d\left(g(x_1), f(x_1)\right) + d\left(g(x_2), f(x_2)\right) + d\left(g(x_1), g(x_2)\right)
\end{array}
\end{equation}
Substituting the bounds of $d\left(g(x_2), f(x_2)\right)$ and  $d\left(g(x_1), g(x_2)\right)$ we get
\begin{equation}
\begin{array}{lcl}
2 \epsilon + d\left(g(x_1), g(x_2)\right) & > & d\left(g(x_1), f(x_1)\right) + d\left(g(x_2), f(x_2)\right) + d\left(g(x_1), g(x_2)\right)
\\[2ex] 
& \geq & d\left(f(x_1), f(x_2)\right) > \delta
\end{array}
\end{equation}
for all $\delta \in \mathbb{R}$. Therefore, we also have
\begin{equation}
\begin{array}{lcl}
d\left(g(x_1), g(x_2)\right) & > & \delta - 2\epsilon.
\end{array}
\end{equation}
\end{proof}

Consider the earlier example of banking where, there are two individuals, $A$ and $B$, who possess the same degree of merit. Imagine that the bank approves $A$'s loan application and denies $B$. This outcome remains the same as per the auditor's fair relation. Imagine further that neither $A$ nor $B$ merits the outcome. Though both banking's evaluation and auditor's judgements seem to be similar, they violate the precept, "treat similar individuals similarly". Moreover, the outcome violates the principle of non-comparative justice, since $A$ is treated in a way that $A$ does not merit. Hence, we can assert that banking evaluation does not satisfy individual fairness.

\subsection{Relation with Group Fairness}\label{sec: Group Fairness}
We define a weaker definition for group fairness in the following manner:
\begin{defn}[Coarse Statistical Parity]
Given set of protected attributes $A$, $f$ satisfies $\delta$-statistical parity when 
\begin{equation}
\mathbb{P}[f(x) = y \ | \ A = a] - \mathbb{P}[f(x) = y \ | \ A = a'] \leq \delta
\end{equation}
holds true for all $y \in Y$, $x \in X$, and $a, a' \in A$.
\label{Defn: delta-GF}
\end{defn}

We also define a weaker version of equal opportunity as follows.
\begin{defn}[Coarse Equal Opportunity]
Given a set of protected attributes $A$ and outcome label $y$, $f$ satisfies $\delta$-equal opportunity when
\begin{equation}
\mathbb{P}[f(x) = 1 \ | \ y = 1, A = a] - \mathbb{P}[f(x) = 1 \ | \ y = 1, A = a'] \leq \delta.
\end{equation}
\end{defn}

Analogously, a weaker version of calibration can be defined as follows.
\begin{defn}[Coarse Calibration]
A binary predictor $f$ satisfies calibration given a set of protected attributes $A$ and outcome, $Y = f(x)$, if
\begin{equation}
\mathbb{P}[y = 1 \ | \ f(x) = 1, A = a] - \mathbb{P}[y = 1 \ | \ f(x) = 1, A = a'] \leq \delta.
\end{equation}
\end{defn}

Note that the above definitions equates to Definitions \ref{Defn: GF}, \ref{Defn: EO}, \ref{Defn: Calibration} respectively, when the difference between the probabilities is equal to zero. As discussed earlier, group fairness notions resembles the principle of comparative justice by comparing certain probabilistic measure across two protected groups. In the remaining section, we will focus on the relationship between group and non-comparative fairness notions. For the sake of convenience, let us denote $p_{x,y}(g,a) = \mathbb{P}[g(x) = y \ | \ A = a]$.
\begin{prop}
Given that the probability distributions are $M$-Lipschitz continuous over all possible $f$ and $g$ functions, $g$ satisfies $(2M\epsilon + \delta)$-statistical parity, if $g$ is $\epsilon$-noncomparatively fair with respect to $f$, and $f$ satisfies $\delta$-statistical parity.
\label{Prop: GF}
\end{prop}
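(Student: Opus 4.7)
The plan is to reduce the statistical parity of $g$ to that of $f$ by inserting $f$ as an intermediate term and controlling the discrepancy between $p_{x,y}(g,a)$ and $p_{x,y}(f,a)$ via the Lipschitz hypothesis. Concretely, I would write the identity
\begin{equation}
p_{x,y}(g,a) - p_{x,y}(g,a') = \bigl[p_{x,y}(g,a) - p_{x,y}(f,a)\bigr] + \bigl[p_{x,y}(f,a) - p_{x,y}(f,a')\bigr] + \bigl[p_{x,y}(f,a') - p_{x,y}(g,a')\bigr],
\end{equation}
and then bound each of the three bracketed terms separately.

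For the middle term, I would invoke the hypothesis that $f$ satisfies $\delta$-statistical parity (Definition \ref{Defn: delta-GF}) to conclude the bracket is at most $\delta$. For the outer two terms, the idea is to use the $M$-Lipschitz continuity of the conditional probability (as a functional of the classifier, measured in the same metric $d$ that appears in Definition \ref{Defn: Noncomparative Fairness}): since $g$ is $\epsilon$-noncomparatively fair with respect to $f$, we have $d(g(x), f(x)) < \epsilon$ for every $x$, so Lipschitz continuity yields $|p_{x,y}(g,a) - p_{x,y}(f,a)| \le M\epsilon$ and likewise for $a'$. Summing the three bounds gives exactly $2M\epsilon + \delta$.

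The routine step is the triangle-style addition of the bounds; the main obstacle (more conceptual than computational) is justifying the Lipschitz control on the conditional probabilities. One has to commit to an interpretation of "the probability distributions are $M$-Lipschitz continuous over all possible $f$ and $g$", namely that the map taking a classifier $h$ to the quantity $\mathbb{P}[h(x) = y \mid A = a]$ changes by at most $M\cdot \sup_x d(h(x), h'(x))$ when $h$ is replaced by $h'$. Once this is stated as the precise meaning of the hypothesis, the bound $|p_{x,y}(g,a) - p_{x,y}(f,a)| \le M\epsilon$ follows immediately from $\sup_x d(g(x),f(x)) < \epsilon$, and the proposition is established.
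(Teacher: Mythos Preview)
Your proposal is correct and mirrors the paper's own argument: the same three-term decomposition of $p_{x,y}(g,a)-p_{x,y}(g,a')$ via $f$, the $\delta$-bound on the middle term from $f$'s statistical parity, and the $M\epsilon$ Lipschitz bound on each outer term from $d(g(x),f(x))<\epsilon$. Your telescoping identity is in fact written with the correct signs (the paper's displayed decomposition has a sign slip in the second bracket), and your explicit articulation of what the Lipschitz hypothesis must mean is a helpful clarification the paper leaves implicit.
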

\begin{proof}
Given the set of protected attributes $\mathcal{A}$, since $f$ satisfies $\delta$-statistical parity, we have $ || p_{x,y}(f,a) - p_{x,y}(f,a') || < \delta$ for all $a, a' \in \mathcal{A}$. Then, we have
\begin{equation}
\begin{array}{lcl}
p_{x,y}(g, a) - p_{x,y}(g, a') & = & \left[ p_{x,y}(g, a) - p_{x,y}(f, a) \right] + \left[ p_{x,y}(g, a') - p_{x,y}(f, a') \right]
\\[1ex]
&& \quad \quad + \left[ p_{x,y}(f, a) - p_{x,y}(f, a') \right]
\end{array}
\end{equation}
Assuming $M$-Lipschitz continuity over all $f(x)$, $g(x)$, we have $|| p_{x,y}(g, a) - p_{x,y}(f, a) || < M \cdot \epsilon$, since $d (g(x), f(x) ) < \epsilon$. Combining all the inequalities, we have 
\begin{equation}
|| p_{x,y}(g, a) - p_{x,y}(g, a') || < 2 M \epsilon + \delta.
\label{Res: GF}
\end{equation}
\end{proof}
Again, consider the earlier example of loan approvals to illustrate the above proposition. Consider that there exists two groups which are classified based income - low and high. The banking system builds a credit model based purely. Moreover, the system may decide to use different requirement levels - low interest or default to low income group, so that the percentage of people getting a loan in low-income group is equal to the percentage of people getting a loan in high-income group. Now, suppose an auditor presents fair judgements based on the rule: “If Group A has a FICO credit score of 550 and cleared all the debts, the loan must be granted. If Group B has a FICO score of 700 and has valuable collateral, grant the loan”. Note that, the auditor's fair relation is somewhat similar to that of the bank's policy. Since the bank's policy is known to be statistically fair, the auditor is also unbiased from a group fairness perspective.  

Similarly, the following two propositions identify the relationship between our proposed non-comparative notion and two other group fairness notions, namely coarse equal opportunity and coarse calibration.
\begin{prop}
Given that the probability distributions are $M$-Lipschitz continuous over all possible $f$ and $g$ functions, $g$ satisfies $(2M\epsilon + \delta)$-equal opportunity, if $g$ is $\epsilon$-noncomparatively fair with respect to $f$, and $f$ satisfies $\delta$-equal opportunity.
\label{Prop: EO}
\end{prop}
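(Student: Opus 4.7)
The plan is to mirror the strategy used in Proposition \ref{Prop: GF}, since coarse equal opportunity is structurally identical to coarse statistical parity except for an extra conditioning event $y = 1$. The key insight is that conditioning only changes the label of the probability measure, not the algebraic decomposition one performs with the triangle inequality.

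First, I would introduce a shorthand analogous to $p_{x,y}(g,a)$, say $q_{x}(h, a) = \mathbb{P}[h(x) = 1 \mid y = 1, A = a]$ for any predictor $h$. The quantity we need to bound is $\lvert q_{x}(g,a) - q_{x}(g,a') \rvert$. The hypothesis that $f$ satisfies $\delta$-equal opportunity gives us $\lvert q_{x}(f,a) - q_{x}(f,a') \rvert \leq \delta$ for all protected values $a, a' \in \mathcal{A}$.

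Next, I would insert and subtract the corresponding $f$-terms to obtain the additive decomposition
\begin{equation}
q_{x}(g,a) - q_{x}(g,a') = \bigl[q_{x}(g,a) - q_{x}(f,a)\bigr] + \bigl[q_{x}(f,a) - q_{x}(f,a')\bigr] + \bigl[q_{x}(f,a') - q_{x}(g,a')\bigr].
\end{equation}
The middle bracket is controlled by the $\delta$-equal-opportunity hypothesis. For the outer two brackets, I would invoke the $M$-Lipschitz continuity assumption: since $d(g(x), f(x)) < \epsilon$ for every $x \in \mathcal{X}$ (by $\epsilon$-noncomparative fairness of $g$ with respect to $f$), the distributions induced by $g$ and $f$ conditional on $\{y=1, A=a\}$ differ by at most $M\epsilon$. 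Applying the triangle inequality and combining the three bounds then yields $\lvert q_{x}(g,a) - q_{x}(g,a') \rvert < 2M\epsilon + \delta$, which is exactly the desired $(2M\epsilon + \delta)$-equal opportunity for $g$.

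The only substantive step, as in Proposition \ref{Prop: GF}, is justifying that the $M$-Lipschitz continuity assumption on the probability distributions transfers to the conditional probabilities used in equal opportunity; once that is accepted, the rest is a verbatim repetition of the statistical-parity argument. I do not anticipate any additional obstacle beyond this point, since the conditioning event $y=1$ appears identically on both sides of every inequality and therefore drops out of the algebra.
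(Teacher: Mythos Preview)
Your proposal is correct and follows exactly the approach the paper intends: the paper's own proof simply states that the argument is identical to that of Proposition~\ref{Prop: GF} and omits the details, and your write-up supplies precisely that mirrored argument with the extra conditioning on $y=1$. There is nothing to add.
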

\begin{proof}
The proof is similar to that of Proposition \ref{Prop: GF}. Therefore, for the sake of brevity, the proof is not included. 
\end{proof}

\begin{prop}
Given that the probability distributions are $M$-Lipschitz continuous over all possible $f$ and $g$ functions, $g$ satisfies $(2M\epsilon + \delta)$-calibration, if $g$ is $\epsilon$-noncomparatively fair with respect to $f$, and $f$ satisfies $\delta$-calibration.
\label{Prop: Calibration}
\end{prop}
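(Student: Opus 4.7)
The plan is to mirror the proof of Proposition \ref{Prop: GF} almost verbatim, but with the statistical-parity shorthand $p_{x,y}(h,a)$ replaced by a calibration shorthand $q_{x,y}(h,a) = \mathbb{P}[y = 1 \mid h(x) = 1, A = a]$. First I would introduce this notation and re-state the hypotheses in it: $f$ satisfies $\delta$-calibration means $\|q_{x,1}(f,a) - q_{x,1}(f,a')\| < \delta$, while $\epsilon$-noncomparative fairness of $g$ w.r.t.\ $f$ gives $d(g(x),f(x)) < \epsilon$ for every $x$.

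Next, I would add and subtract $q_{x,1}(f,a)$ and $q_{x,1}(f,a')$ to decompose the quantity of interest into three telescoping pieces:
\begin{equation}
q_{x,1}(g,a) - q_{x,1}(g,a') = \bigl[q_{x,1}(g,a) - q_{x,1}(f,a)\bigr] + \bigl[q_{x,1}(f,a) - q_{x,1}(f,a')\bigr] + \bigl[q_{x,1}(f,a') - q_{x,1}(g,a')\bigr].
\end{equation}
The middle term is bounded in magnitude by $\delta$ by the calibration hypothesis on $f$. For the two outer terms, I would invoke the $M$-Lipschitz continuity of the conditional distributions in the underlying classifier: since $d(g(x),f(x)) < \epsilon$, each of $\|q_{x,1}(g,a) - q_{x,1}(f,a)\|$ and $\|q_{x,1}(g,a') - q_{x,1}(f,a')\|$ is strictly less than $M\epsilon$. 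Chaining these three bounds through the triangle inequality yields $\|q_{x,1}(g,a) - q_{x,1}(g,a')\| < 2M\epsilon + \delta$, which is exactly the $(2M\epsilon+\delta)$-calibration guarantee for $g$.

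The main obstacle is conceptual rather than computational: unlike the statistical-parity case, calibration conditions on the predictor's own output $h(x)=1$, so swapping $g$ for $f$ changes both the measured event and the conditioning event. Consequently the phrase ``$M$-Lipschitz continuity over all $f$ and $g$'' has to be read as Lipschitz continuity of the ratio $\mathbb{P}[y=1, h(x)=1 \mid A=a] \big/ \mathbb{P}[h(x)=1 \mid A=a]$ as a function of the classifier $h$ under the metric $d$. This is a noticeably stronger assumption than in Proposition \ref{Prop: GF}, and in a fully careful write-up one would flag it explicitly; once that convention is adopted, however, the algebraic argument is essentially identical to that of Proposition \ref{Prop: GF}, which is presumably why the authors suppress the details.
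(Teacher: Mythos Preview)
Your proposal is correct and is precisely the approach the paper intends: the paper's own proof of this proposition says only that it is ``similar to that of Proposition~\ref{Prop: GF}'' and omits the details, so your telescoping decomposition with the calibration shorthand $q_{x,1}(h,a)$ and the $M$-Lipschitz bound on the outer terms is exactly what is being gestured at. Your final paragraph flagging that calibration conditions on the predictor's own output---so the Lipschitz hypothesis must control a ratio that depends on $h$ in both numerator and denominator---is a genuine subtlety that the paper glosses over; it does not affect correctness once the assumption is read as you describe, but it is a useful caveat that goes beyond what the paper provides.
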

\begin{proof}
The proof is similar to that of Proposition \ref{Prop: GF}. Therefore, for the sake of brevity, the proof is not included. 
\end{proof}

\section{Identification of Fair Auditors}\label{sec: Unknown Biases}
\begin{figure}[t]
    \centering
    \includegraphics[width=\textwidth]{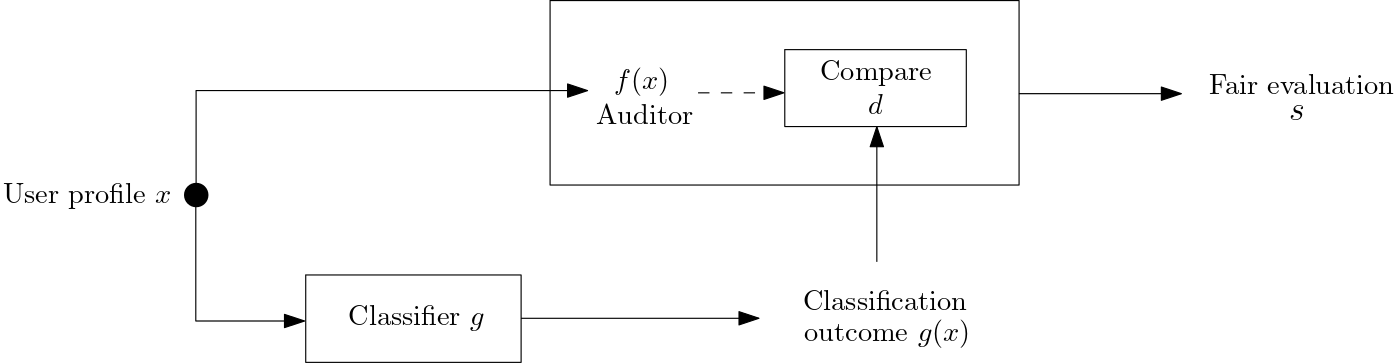}
    \caption{Identifying fair/unfair classifier using a fair auditor}
    \label{fig:NCF Architecture}
\end{figure}
Several attempts have been made to identify how people perceive fairness in order to automate the process of fixing fairness from an algorithmic standpoint \cite{binns2019human,grgic2018human,greenberg1986determinants, saxena2019fairness, srivastava2019mathematical}. However, the underlying assumption in most of the fairness literature is that these systems are evaluated by fair and unbiased auditors. This is not always true because people exhibit a wide range of biases based on diverse prior experiences. As a solution to this problem, we will demonstrate how the proposed non-comparative fairness notion can be used to identify fair auditors. As shown in Figure \ref{fig:NCF Architecture}, we compare an auditor's evaluation to a benchmark entity (e.g. system, or a trained human expert) whose biases are well-quantified in terms of both weak individual fairness and weak group fairness notions as defined in Section \ref{sec: NC Fairness}. The auditor employs a judgement $f$ to evaluate the datapoints and presents a binary score $s$ by comparing it with the benchmark entity's output as shown below.
\begin{equation}
s = \begin{cases}
1, & \text{if } \ d(g(x), f(x)) \geq \epsilon,
\\[1ex]
0, & \text{otherwise.}
\end{cases}
\label{Eqn: Auditor Judgement}
\end{equation}
Note that both $f(x)$ and $\epsilon$ are latent parameters of the auditor's decision model in our framework, and are assumed to be unknown to the system. However, given a benchmark system, we can evaluate the biases of an unknown auditor as a function of biases within the benchmark entity using Propositions \ref{Prop: (e, k, d)-IF} and \ref{Prop: GF}, as stated below.

\begin{cor}[to Proposition \ref{Prop: (e, k, d)-IF}]
If the benchmark entity is $(\kappa, \delta)$-individually fair, then the unknown auditor can be deemed $(\kappa, \delta')$-individually fair when the auditor is $\epsilon$-noncomparitively fair with respect to the benchmark entity such that
$$
\epsilon < \displaystyle \frac{\delta' - \delta}{2}.
$$
\end{cor}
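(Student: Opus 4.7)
The plan is to reduce the statement to Proposition \ref{Prop: (e, k, d)-IF} via a single application with the $f$ and $g$ roles instantiated carefully. First I would set the benchmark entity to play the role of the reference relation $f$ in Proposition \ref{Prop: (e, k, d)-IF} (which is given here to be $(\kappa, \delta)$-individually fair), and set the unknown auditor to play the role of $g$ (which is given to be $\epsilon$-noncomparatively fair with respect to the benchmark). Invoking Proposition \ref{Prop: (e, k, d)-IF} verbatim then yields that the auditor is $(\kappa, 2\epsilon + \delta)$-individually fair.

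The remaining step is to upgrade this conclusion to $(\kappa, \delta')$-individual fairness for the stated $\delta'$. Because the individual fairness parameter is only an upper bound on $d(g(x_1), g(x_2))$ for $\kappa$-close inputs, $(\kappa, \eta)$-individual fairness trivially implies $(\kappa, \eta')$-individual fairness for any $\eta' \geq \eta$. Therefore it suffices to check $2\epsilon + \delta \leq \delta'$, and the assumed bound $\epsilon < (\delta' - \delta)/2$ rearranges precisely to $2\epsilon + \delta < \delta'$, which closes the argument.

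The main (and essentially only) obstacle is bookkeeping: Section \ref{sec: Unknown Biases} has adopted the convention that the auditor's judgement is denoted $f$ while the benchmark's output is $g$, which is the reverse of the convention used in the statement of Proposition \ref{Prop: (e, k, d)-IF} (where $f$ is the anchor and $g$ is noncomparatively fair with respect to $f$). So the one thing to be careful about is to relabel the two functions so that the hypotheses of Proposition \ref{Prop: (e, k, d)-IF} are satisfied as written; once that relabeling is performed, the remainder is a one-line monotonicity observation and a one-line algebraic rearrangement, requiring no new machinery.
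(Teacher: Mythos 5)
Your proposal is correct and matches the paper's own proof essentially verbatim: the paper likewise sets $f$ to be the benchmark entity and $g$ the unknown auditor, invokes Proposition \ref{Prop: (e, k, d)-IF} to get $d(g(x_1),g(x_2)) < 2\epsilon + \delta$, and then requires $\delta' > 2\epsilon + \delta$ and rearranges. Your extra note about the reversed $f$/$g$ convention in Section \ref{sec: Unknown Biases} is a fair observation, and the paper handles it exactly as you suggest, by explicitly relabeling at the start of the proof.
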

\begin{proof}
Let $g$ denote the unknown auditor and $f$ denote the benchmark entity. From Proposition \ref{Prop: (e, k, d)-IF}, if $f$ is $(\kappa, \delta)$-individually fair and $g$ is $\epsilon$-noncomparatively fair with respect to $f$, we have
\begin{equation}
\begin{array}{lcl}
d\left(g(x_1), g(x_2)\right) & < & 2\epsilon + \delta
\end{array}
\end{equation}

However, our goal is to identify auditors who are $(\kappa, \delta')$-individually fair. In other words, we need $\delta'$ to be at least as large as $2\epsilon + \delta$. In other words,
\begin{equation}
\begin{array}{lcl}
\delta' & > & 2\epsilon + \delta.
\end{array}
\end{equation}

Upon rearranging the terms, we get the bound on $\epsilon$ stated in this corollary.

\end{proof}

Similarly, we can also quantify the tolerable bias in an unknown auditor if the goal is to identify a fair auditor from a weak statistical parity sense. This is discussed in the following corollary.
\begin{cor}[to Proposition \ref{Prop: GF}]
Assuming $M$-Lipschitz continuity in probability distributions at both the benchmark entity and the unknown auditor, if the benchmark entity satisfies $\delta$-statistical parity, then the unknown auditor can be deemed to satisfy $\delta'$-statistical parity when the auditor is $\epsilon$-noncomparitively fair with respect to the benchmark entity such that
$$
\epsilon < \displaystyle \frac{\delta' - \delta}{2M}.
$$
\end{cor}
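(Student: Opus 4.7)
The plan is to treat this corollary as a direct reparametrization of Proposition \ref{Prop: GF}, exactly parallel to how the preceding corollary reparametrizes Proposition \ref{Prop: (e, k, d)-IF}. I would set $f$ to denote the benchmark entity and $g$ to denote the unknown auditor, and note that the two hypotheses of Proposition \ref{Prop: GF} are exactly the hypotheses given here: $M$-Lipschitz continuity of the conditional distributions, $\delta$-statistical parity of $f$, and $\epsilon$-noncomparative fairness of $g$ with respect to $f$.

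Next, I would directly invoke Proposition \ref{Prop: GF} to conclude that $g$ already satisfies $(2M\epsilon + \delta)$-statistical parity, i.e. $\|p_{x,y}(g,a) - p_{x,y}(g,a')\| < 2M\epsilon + \delta$ for all protected attribute values $a, a' \in \mathcal{A}$. Since the goal of the corollary is to ensure that $g$ is $\delta'$-statistically fair, it suffices to impose $2M\epsilon + \delta \leq \delta'$. Rearranging this inequality for $\epsilon$ yields the claimed bound
\begin{equation*}
\epsilon < \frac{\delta' - \delta}{2M},
\end{equation*}
which completes the argument.

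There is essentially no analytical obstacle here, since all the heavy lifting — in particular, the triangle-inequality decomposition of $p_{x,y}(g,a) - p_{x,y}(g,a')$ and the use of Lipschitz continuity to convert $d(g(x), f(x)) < \epsilon$ into a bound on probability differences — was already carried out in the proof of Proposition \ref{Prop: GF}. The only care needed is to make explicit that the corollary is a design criterion: given a tolerance $\delta'$ on the auditor's statistical parity gap and a known bound $\delta$ on the benchmark, the inequality tells the system designer how close the auditor's outputs must be to the benchmark's outputs (in the $\epsilon$-sense) for the auditor to be certified $\delta'$-statistically fair. I would therefore keep the proof to a few lines, citing Proposition \ref{Prop: GF} and performing the one-step algebraic rearrangement, mirroring the brevity of the preceding corollary's proof.
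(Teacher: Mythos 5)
Your proposal is correct and follows essentially the same route as the paper: identify $g$ with the auditor and $f$ with the benchmark, invoke Proposition \ref{Prop: GF} to get the $(2M\epsilon + \delta)$-statistical parity bound, and rearrange $\delta' > 2M\epsilon + \delta$ to obtain the stated condition on $\epsilon$. No gaps.
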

\begin{proof}
Let $g$ denote the unknown auditor and $f$ denote the benchmark entity. From Proposition \ref{Prop: GF}, if all probability distributions are $M$-Lipschitz continuous, $f$ satisfies $\delta$-statistical parity and $g$ is $\epsilon$-noncomparatively fair with respect to $f$, we have
\begin{equation}
\begin{array}{lcl}
||p_{x, y}(g, a) - p_{x, y}(g, a')|| < 2M\epsilon + \delta
\end{array}
\end{equation}

However, our goal is to identify auditors who satisfy $\delta'$-statistical parity. In other words, we need
\begin{equation}
\begin{array}{lcl}
\delta' > 2M\epsilon + \delta.
\end{array}
\end{equation}

Upon rearranging the terms, we obtain the bound on $\epsilon$ as stated in the corollary statement.
\end{proof}
  
\section{Empirical Analysis and Validation \label{sec: Validation}}
In this section, we validate our theoretical findings using real-world data. We experiment with three datasets:

\begin{enumerate}[label=(\roman*)]
\item \textbf{\emph{ProPublica's COMPAS dataset:}} The goal of COMPAS is to predict the defendant's likelihood to re-offend. The output feature is binary (least likely or most likely) and the input features are as follows: age, race, sex, decile score (0 to 10), degree of offence (felony or misdemeanor), and priors count (number of earlier offences). This dataset consists of 7214 data tuples. We perform same preprocessing as the original analysis of ProPublica. The races in the dataset are only restricted to African-American, Caucasian, and other. WE consider Females and Caucasians as privileged groups and 0 (least likely) is considered as favourable outcome. Since the feature \emph{age} is continuous, we create different age groups (e.g. 25-45 or >45) and rename the features as \emph{age category}. Similar grouping is also performed with the feature \emph{priors count} as well. To encode the categorical features (\emph{age category, priors count}, and \emph{charge degree}), we generated dummies for each feature and converted categorical columns to columns of 0s and 1s. Upon preprocessing, the dataset consists of 5278 data tuples. Note that we also consider \emph{decile score} as an output feature to test our approach on $M$-ary classifiers (binary classifier if $M = 2$, and non-binary classifier if $M > 2$).

\item \textbf{\emph{German credit data:}} The task-at-hand is to predict the credit risk (low or high) of an individual. We consider credit history, savings, employment, personal status, and age. Using the feature \emph{personal status}, we create a new column labelled as \emph{sex}. Moreover, the feature \emph{age} is categorized into two groups: young (< 26) and old (>=26). In this dataset, Males and older individuals are considered as privileged members and 1 (good credit risk) is viewed as a favourable outcome. We employed the same dummy variables approach to encode the categorical features. The dataset consists of 1000 data tuples.

\item \textbf{\emph{Adult Census Income dataset (from UCI Data repository):}} The objective is to predict whether the income of an individual is  >\$50K or <\$50K. The input features include age, sex, race, and education. In preprocessing phase, the continuous feature \emph{age} is transformed to different groups of ages (0-10, 11-20, and so on). Regarding the feature \emph{race}, we limited the labels to binary by mapping White to 1 and all other races to 0. We have 32561 data tuples after preprocessing.
\end{enumerate}

Though our framework specifies that the auditor reveals a binary judgement $s \in \{0, 1\}$, for the sake of practical evaluation, we assume that he/she reveals the exact classification of the input in the same space as the respective system. The rest of the section is presented as follows. Firstly, we evaluate whether a given auditor is fair with respect to comparative fairness notions. Once the auditor is identified as fair, he/she can be leveraged as a benchmark entity to evaluate the datasets. 

\subsection{Evaluating Individual Fairness}
Note that, individual fairness notions rely on a similarity metric $\mathcal{D}$ between two individuals. Since the attributes in real-world datasets are correlated to one another, we consider Mahalanobis distance to compute the similarity between two randomly picked individuals, because it measures distances between points considering how the rest of the datapoints are distributed. The squared Mahalanobis distance can be defined as follows.
\begin{equation}
\mathcal{D}^2 = (\boldsymbol{x}_i - \boldsymbol{x}_j)^T C^{-1} (\boldsymbol{x}_i - \boldsymbol{x}_j)
\end{equation}
where $\boldsymbol{x}_i, \boldsymbol{x}_j$ are observations/rows in a dataset and $C$ is positive semi-definite covariance matrix. Initially, we compute the covariance matrix which summarizes the variance of the dataset. The term $(\boldsymbol{x}_i - \boldsymbol{x}_j)$ represents the vector difference between $\boldsymbol{x}_i$ and $\boldsymbol{x}_j$. The maximum Mahalanobis distance between any two individuals in COMPAS dataset is found to be 9.2. On the other hand, to compute the distance between the outcomes, $d(\cdot)$, we consider absolute difference as the distance metric. 

From the perspective of binary classifiers, a system/entity can only comply with non-comparative fairness when it's evaluations are exactly similar to the auditor's judgements. In other words, the distance between the system's evaluation and auditor's judgement, $d\left(g(x), f(x)\right)$, is upper bounded by 1 $(=\epsilon)$. Therefore, from the Equation \eqref{Eqn: Prop1 - Final eqn} we have, $d \Big( g(x_1), g(x_2) \Big) \leq 2 + \delta$. Moreover, since the distance between the outcomes, $d$, is considered as the absolute difference, $d \in \{0, 1\}$ always. Hence, Equation \eqref{Eqn: Prop1 - Final eqn} will always be true for any given pair of individuals. To avoid this, we only consider m-ary classifiers to evaluate individual fairness. We evaluate the COMPAS dataset with \emph{decile score} (on a scale of 1 to 10) as the output feature. 

\begin{figure}
    \centering
    \begin{minipage}{0.5\textwidth}
        \centering
        \includegraphics[width=\textwidth]{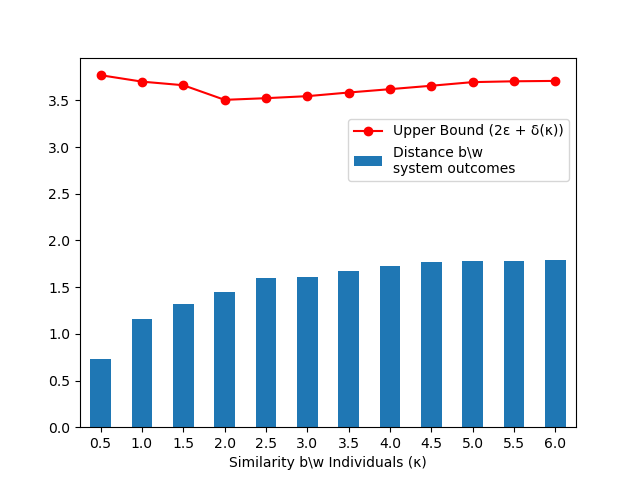}
    \end{minipage}\hfill
    \begin{minipage}{0.5\textwidth}
        \centering
        \includegraphics[width=\textwidth]{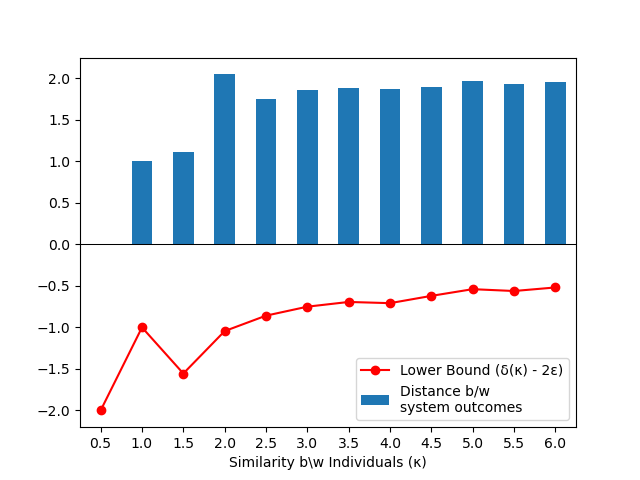}
    \end{minipage}
    \caption{Evaluating COMPAS dataset using Individual Fairness with respect to the Auditor}
\label{Fig: IF-COMPAS}
\end{figure}

Consider the following illustrative example for auditor's judgement to evaluate COMPAS dataset.
\begin{equation*}
f_{COMPAS}(x) = 
\begin{cases}
1, & \text{if $x$.priors-count = 0 and $x$.charge-degree = M}
\\[1ex]
2, & \text{if $x$.priors-count = 0 and $x$.charge-degree = F} 
\\[1ex]
3, & \text{if $x$.priors-count $\in [1, 3]$ and $x$.charge-degree = F}
\\[1ex]
4, & \text{if $x$.priors-count $\in [4, 7]$, $x$.age-category $> 45$ and $x$.charge-degree = M} 
\\[1ex]
5, & \text{if $x$.priors-count $\in [4, 7]$, $x$.age-category $> 45$ and $x$.charge-degree = F} 
\\[1ex]
6, & \text{if $x$.priors-count $\in [4, 7]$ and $x$.age-category $\in [16-45]$} 
\\[1ex]
7, & \text{if $x$.priors-count $\in [8, 15]$ and $x$.age-category $> 45$}
\\[1ex]
8, & \text{if $x$.priors-count $\in [8, 15]$ and $x$.age-category $\in [16-45]$}
\\[1ex]
9, & \text{if $x$.priors-count $\in [16, 24]$ and $x$.age-category $> 45$}
\\[1ex]
10, & \text{if $x$.priors-count $> 24$}
\end{cases}
\end{equation*}

Note that the above relation of the auditor does not satisfy individual fairness as stated in Proposition \ref{Prop: IF - converse}, since there is no $(\kappa, \delta)-$pair that satisfies at every input pair (since $\max(\kappa) = 9.2$ and $\max(\delta) = 9$), as shown in Fig. \ref{Fig: IF-COMPAS}. 

\subsection{Evaluating Group Fairness Notions}

We consider German credit, Adult income, and COMPAS datasets to evaluate group fairness notions. Note that, for COMPAS dataset, the binary feature \emph{two-year-recid} (most likely or least likely) is viewed as the output feature. According the Propositions \ref{Prop: GF}, \ref{Prop: EO}, and \ref{Prop: Calibration}, the auditor must satisfy $\delta$-group fairness. We construct different fair relations for the auditor for different datasets as follows. Firstly, for COMPAS dataset, we only consider \emph{priors count} (number of prior offences) and \emph{charge degree} (degree of the offence) to construct the fair relation.
\begin{equation*}
f_{COMPAS}(x) = 
\begin{cases}
1, & \text{if $x$.priors-count $\in [1,3]$ and $x$.charge-degree = F }
\\
& \qquad \qquad \qquad \qquad \text{OR}
\\
& \text{if $x$.priors-count $> 3$ and $x$.charge-degree = M}
\\[1ex] 
0, & \text{otherwise.} 
\end{cases}
\end{equation*}
Since the task of the Adult income dataset is to predict whether yearly income of an individual is >\$50K or <=\$50K, we consider the feature \emph{education} in the auditor's fair relation as shown below.
\begin{equation*}
f_{Adult}(x) = 
\begin{cases}
1, & \text{if $x$.education $\in$ [Bachelors, Masters, School Professor, Doctorate]}
\\[1ex] 
0, & \text{otherwise.} 
\end{cases}
\end{equation*}
Similarly, for German credit dataset, the features \emph{savings, credit history} and \emph{employment} are considered while designing the auditor's relation.
\begin{equation*}
f_{Credit}(x) = 
\begin{cases}
1, & \text{if $x$.savings $> 500$, $x$.credit-history = Paid, and $x$.employment $> 2$ years}
\\[1ex] 
2, & \text{otherwise.} 
\end{cases}
\end{equation*}

Having defined auditor's evaluation functions, we now demonstrate whether the auditor is fair with respect to different group fairness across every dataset. More specifically, the auditor's evaluations must satisfy the weaker definitions of group fairness notions defined earlier. Statistical parity difference is computed as the difference of the rate of favorable outcomes received by the unprivileged group to the privileged group. The ideal value of this metric is 0. The statistical parity difference of auditor's evaluations across different protected features is presented in Table \ref{Table: Auditor GF Eval}. We observe that the statistical parity differences across different datasets based on respective auditor's fair relation are significantly close to 0. Therefore, we can accurately say that the auditor satisfies statistical parity with respect to every dataset. On the other hand, equalized odds difference is computed as the difference of true positive rates between the unprivileged and the privileged groups. The ideal value is 0. However, a value of $< 0$ implies higher benefit for the privileged group and a value $> 0$ implies higher benefit for the unprivileged group. Table \ref{Table: Auditor GF Eval} shows that the auditor performs poorly with COMPAS dataset when \emph{sex} is considered as the protected feature. Whereas, the probabilistic differences with Adult income and German credit datasets are actually close to 0. Hence, we can assert that the auditor satisfies equal opportunity regarding all the datasets. Lastly, calibration difference is calculated as the difference of positive predictive value between the unprivileged and the privileged groups. Observations from Table \ref{Table: Auditor GF Eval} reveals that the auditor's evaluations satisfy calibration across all the datasets.

\begin{table}[t]
\caption{Evaluating auditor's fair relations with respect to group fairness notions}
\centering
\begin{adjustbox}{width=\textwidth}
\begin{tabular}{ |m{4cm}|m{1cm}|m{1cm}|m{1cm}|m{1cm}|m{1cm}|m{1cm}| }
\hline
\\[-1em]
& \multicolumn{2}{|c|}{COMPAS} & \multicolumn{2}{|c|}{Adult Income} & \multicolumn{2}{|c|}{German Credit}
\\\hline
\\[-1em]
& sex & race & sex & race & sex & age
\\\hline
\\[-1em]
Statistical Parity Difference & -0.05 & -0.02 & 0.02 & -0.05 & 0.03 & 0.07
\\\hline
\\[-1em]
Equal Opportunity Difference & 0.12 & 0.08 & 0.01 & 0.01 & 0.01 & 0.01
\\\hline
\\[-1em]
Calibration Difference & -0.003 & 0.09 & 0.01 & 0.01 & 0.01 & 0.01
\\\hline
\end{tabular}
\end{adjustbox}
\label{Table: Auditor GF Eval}
\end{table}

Leveraging the fair auditor as a benchmark entity, we now evaluate the datasets using the proposed notion of $\epsilon$-noncomparative fairness. Note that, in case of binary classifiers, the distance between system's evaluation and auditor's judgement must be equal to 0. In other words, the distance, $d(g(x), f(x))$, is upper bounded by 1 $(= \epsilon)$. We now demonstrate whether the datasets comply with group fairness notions with respect to the auditor. Propositions \ref{Prop: GF}, \ref{Prop: EO}, and \ref{Prop: Calibration} specify that the probability distributions are $M$-Lipschitz continuous over all possible $f$ and $g$ functions. Since $\epsilon=1$, the real constant $M$ is lower bounded by the probability difference, $ || p_{x,y}(g,a) - p_{x,y}(f,a) ||$, over all protected features. Moreover, since the probability differences are computed between underprivileged to privileged groups, we consider the exact difference rather than the absolute value. We illustrate the comparison of system's evaluations and auditor's judgements in Figure \ref{Fig: comparison}. Table \ref{Table: Sys GF Eval} summarizes validation results of every dataset across different notions. The outcome distance and the upper bound in Table \ref{Table: Sys GF Eval} indicate the left-hand and right-hand terms in the Equation \eqref{Res: GF} respectively. For COMPAS dataset, it is evident that the system's evaluations are significantly different from auditor's judgements with respect to statistical parity. However, we have seen that the auditor does comply with statistical parity. In other words, COMPAS does not satisfy statistical parity satisfy, because it does not comply with $\epsilon$-noncomparative fairness. As a result, the outcome distance is greater than the upper bound (Table \ref{Table: Sys GF Eval} column \emph{sex} under COMPAS). Unfortunately, similar results can be observed in equal opportunity and calibration notions across different protected attributes with respect to the auditor. Analogously, we validate Adult income and German credit datasets. Interestingly, none of the datasets satisfy statistical parity across both the protected attributes. On the other hand, Adult income and German credit datasets does comply with calibration if \emph{sex} is considered as the protected attribute.

\begin{table}[t]
\caption{Validating different datasets for group fairness notions with respect to the auditor}
\centering
\begin{adjustbox}{width=\textwidth}
\begin{tabular}{ |m{3.5cm}|m{1cm}|m{1cm}|m{1cm}|m{1cm}|m{1cm}|m{1cm}|m{1cm}|m{1cm}|m{1cm}|m{1cm}|m{1cm}|m{1cm}|}
\hline
\\[-1em]
& \multicolumn{4}{|c|}{COMPAS} & \multicolumn{4}{|c|}{Adult Income} & \multicolumn{4}{|c|}{German Credit}
\\\hline
\\[-1em]
& \multicolumn{2}{|c|}{sex} & \multicolumn{2}{|c|}{race} & \multicolumn{2}{|c|}{sex} & \multicolumn{2}{|c|}{race} & \multicolumn{2}{|c|}{sex} & \multicolumn{2}{|c|}{age}
\\\hline
\\[-1em]
& Outcome Distance & Upper Bound & Outcome Distance & Upper Bound & Outcome Distance & Upper Bound& Outcome Distance & Upper Bound& Outcome Distance & Upper Bound& Outcome Distance & Upper Bound
\\\hline
\\[-1em]
Statistical Parity Difference & -0.13 & -0.29 & -0.13 & -0.29 & -0.19 & -0.66 & -0.10 & -0.7 & -0.07 & -0.32 & -0.14 & -0.38
\\\hline
\\[-1em]
Equal Opportunity Difference & -0.26 & -0.48 & -0.24 & -0.36 & 0.46 & -1.5 & 0.18 & -1.36 & 0.002 & 1.7 & -0.38 & -0.74 
\\\hline
\\[-1em]
Calibration Difference & -0.08 & -0.16 & -0.07 & -0.1 & 0.001 & 0.001 & 0.03 & -0.71 & 0.001 & 1.34 & -0.12 & -0.58
\\\hline
\end{tabular}
\end{adjustbox}
\label{Table: Sys GF Eval}
\end{table}

\begin{figure}[t]
\centering
\includegraphics[width=.32\textwidth, trim={0.75cm 0 1cm 0.75cm},clip]{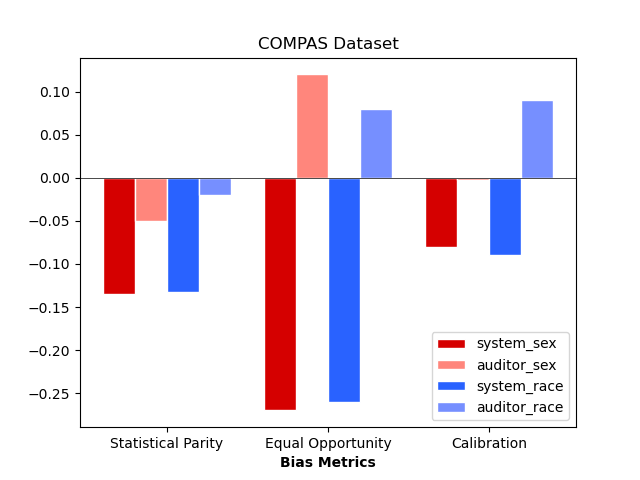}
\includegraphics[width=.32\textwidth, trim={1cm 0 1cm 0.75cm},clip]{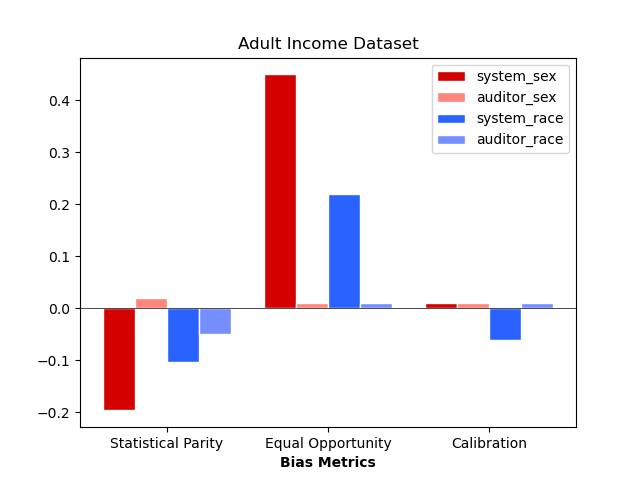}
\includegraphics[width=.32\textwidth, trim={1cm 0 1cm 0.75cm},clip]{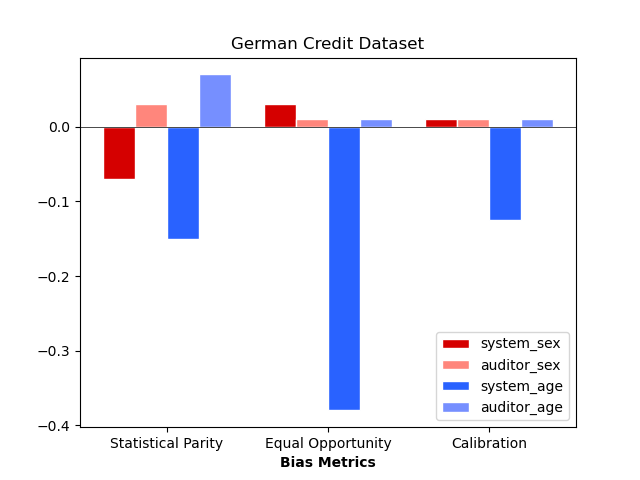}
\caption{Comparing System's Evaluation with Auditor's Judgements}
\label{Fig: comparison}
\end{figure}

\section{Conclusion and Future Work}
We introduced a non-comparative fairness notion which complements the existing comparative fairness notions proposed in the algorithmic fairness literature. We showed that any system can be deemed fair from the perspective of comparative fairness (e.g. individual fairness and statistical parity) if it is non-comparatively fair with respect to an auditor who has been deemed fair with respect to the same fairness notions. We also proved that the converse holds true in the context of individual fairness. We discussed how the proposed non-comparative fairness notion can be used to identify fair auditors who are hired to evaluate latent biases in decision-support systems. We also presented corroborating validation results on three real datasets. In our future work, we will develop novel algorithms to identify fair auditors from the perspective of multiple (potentially intransitive) attributes, and also validate our theoretical findings using real data.

\bibliographystyle{IEEEtran}
\bibliography{sample-bibliography}

\end{document}